\documentclass{article} 
\usepackage[final]{colm2025_conference} 

\usepackage{microtype}
\usepackage{hyperref}
\usepackage{url}
\usepackage{booktabs}

\definecolor{darkblue}{rgb}{0, 0, 0.5}
\hypersetup{colorlinks=true, citecolor=darkblue, linkcolor=darkblue, urlcolor=darkblue}

\title{Understanding and Improving Noisy Embedding Techniques in Instruction Finetuning}

\author{
Abhay Yadav \\
Johns Hopkins University \\
Baltimore, MD, USA \\
\texttt{ayadav13@jh.edu}
}

\usepackage[utf8]{inputenc}
\usepackage{booktabs}
\usepackage{amsfonts}
\usepackage{nicefrac}
\usepackage{microtype}
\usepackage{amsthm}
\usepackage{epstopdf}
\usepackage{bm,xspace}
\usepackage{verbatim}
\usepackage{multirow}
\usepackage{calc}
\usepackage{framed}
\usepackage{eqparbox}
\usepackage{xspace}
\usepackage{xr}
\usepackage{xcolor}
\usepackage{empheq}
\usepackage[most]{tcolorbox}
\usepackage{placeins}
\usepackage{siunitx}
\usepackage{float}
\usepackage{algorithm}
\usepackage{algorithmic}
\usepackage{natbib}
\usepackage{subcaption}

\newcommand{\llama}{LLaMA}
\newcommand{\neft}{\texttt{NEFT}}
\newcommand{\neftune}{\texttt{NEFTune}}
\newcommand{\symnoise}{\texttt{SymNoise}}

\newtheorem{innercustomlemma}{Lemma}
\newenvironment{customlemma}[1]{\renewcommand\theinnercustomlemma{#1}\innercustomlemma}{\endinnercustomlemma}
\newtheorem{lemma}{Lemma}
\newcommand{\myparagraph}{\textbf}
\renewcommand{\tiny}{\small}

\begin{document}

\maketitle

\begin{abstract}
Recent advancements in instructional fine-tuning have injected noise into embeddings, with \neftune{}~\citep{jain2023neftune} setting benchmarks using uniform noise. Despite \neftune{}'s \textbf{empirical findings} that uniform noise outperforms Gaussian noise, the reasons for this remain unclear. This paper aims to clarify this by offering a thorough analysis, both theoretical and empirical, indicating comparable performance among these noise types. Additionally, we introduce a new fine-tuning method for language models, utilizing symmetric noise in embeddings. This method aims to enhance the model's function by more stringently regulating its local curvature, demonstrating superior performance over the current method, \neftune{}. When fine-tuning the  \llama{}-2-7B model using Alpaca, standard techniques yield a $29.79$\% score on AlpacaEval. However, our approach, \symnoise{}, increases this score significantly to $69.04$\%, using symmetric noisy embeddings. This is a $6.7$\% improvement over the state-of-the-art method, \neftune{}~($64.69$\%). Furthermore, when tested on various models and stronger baseline instruction datasets, such as Evol-Instruct, ShareGPT, OpenPlatypus, \symnoise{} consistently outperforms \neftune{}.  The current literature, including \neftune{}, has underscored the importance of more in-depth research into the application of noise-based strategies in the fine-tuning of language models. Our approach, \symnoise{}, is another significant step towards this direction, showing notable improvement over the existing state-of-the-art method.
\end{abstract}

\section{Introduction}
For Large Language Models~\citep{Vaswani+2017, devlin2018bert, radford2019language-gpt2, raffel2020exploring-t5, zhang2022opt, touvron2023llama, zhao2023survey-llms} to be effective, their proficiency in executing specific instructions is crucial~\citep{wang2022self-selfinstruct, ouyang2022training-instructGPT, brown2020language-gpt3, chung2022scaling-FLAN}. These models typically begin with training on a vast array of unfiltered web data, after which they undergo a more focused fine-tuning stage using a smaller, selectively chosen collection of instructional data. The fine-tuning stage, centered on instructions, is fundamental in unlocking and controlling the full capabilities of LLMs. The practical value of these models is predominantly dependent on how efficiently we can leverage these concise instructional data sets for optimal performance.


In recent years, noise injection~\citep{nukrai2022text-cap, zang2021noise-pointcloud, akbiyik2020data-noiseimage} has been a focal point in computer vision research, yielding methods that enhance model robustness and accuracy. This strategy has recently been adapted for fine-tuning Large Language Models (LLMs), exemplified by the \neftune{} method~\citep{jain2023neftune}, which applies uniform random noise to improve model performance on diverse datasets. Despite \neftune{}'s efficacy surpassing traditional fine-tuning techniques, the reasons behind its success, particularly against the commonly used Gaussian noise, are not entirely understood. Our work demystifies this by presenting a detailed theoretical and empirical analysis that reveals comparable results between noise types when appropriately scaled. Moreover, we introduce a novel noise injection approach which not only facilitates a more intuitive understanding but also achieves superior empirical results, outperforming \neftune{} and other established fine-tuning methods by a considerable margin.

In particular, our objective is to regularize the curvature of the function learned during training. Curvature regularization has been used in domains such as computer vision~\citep{moosavi2019robustness, lee2023explicit}, graph embedding~\citep{pei2020curvature}, and  deep neural networks~\citep{huh2020curvature}. Specifically, we aim to ensure that the function's response changes gradually when the input is modified slightly by noise. In more technical terms, our goal is to have the gradient approach zero in the immediate vicinity of an input altered by a minimal amount. This represents a more stringent condition than merely requiring small values for the Hessian or gradient. However, considering computational efficiency, we opt to avoid the direct computation of gradients or Hessians. Instead, we employ this stringent condition, which, as our experiments on real-world benchmark datasets demonstrate, is effective in practical scenarios.

In this paper, we unveil Symmetric Noise Fine Tuning (\symnoise{}), a new technique that leverages symmetric Bernoulli distribution-based noise applied to the embedding vectors of training data during the finetuning stage. Each noise component is generated with an equal probability of $\frac{1}{2}$ for the values $-1$ and $1$. This method significantly enhances instruction finetuning outcomes, often with remarkable gains, while avoiding additional computational or data resources. While maintaining simplicity, \symnoise{} has a profound impact on downstream conversational output quality. We show that when a large langudage model like \llama{}-2-7B~\citep{touvron2023llama-2} is finetuned using \symnoise{}, its performance on \texttt{AlpacaEval}~\citep{dubois2023alpacafarm-rlhf} rises from $29.79$\% to $69.04$\% – a substantial increase of about $39.25$ percentage points.

Importantly, when compared to the existing \neftune{} method~(which uses random uniform noise), \symnoise{} demonstrates a superior performance edge, outperforming \neftune{} by approximately $6.7$\%. Thus, \symnoise{} not only represents a valuable advancement over traditional finetuning methods but also establishes a new benchmark in efficiency and effectiveness for LLM finetuning.

        \myparagraph{Contributions.} In our comprehensive study, we conduct a detailed theoretical and empirical examination, demonstrating that Gaussian and uniform random noise exhibit functional equivalence when adjusted with an appropriate scaling factor, leading to similar performance on real-world datasets. This insight holds significant importance, especially considering that the creators of the \neftune{} method, a leading approach employing uniform noise, have openly recognized gaps in their understanding of the method's superior performance, notably in comparison to the extensively studied Gaussian noise. By establishing a connection with Gaussian noise, our study helps demystify the \neftune{} method. Moreover, we introduce an innovative noise injection method that exceeds the capabilities of \neftune{} and existing alternatives. Our contributions thus propel the momentum for continued exploration in this field.

\begin{table}
\setlength{\tabcolsep}{2pt} 
\caption{\texttt{AlpacaEval} Win Rate against Text-Davinci-003 when applied with \llama{}-2, trained across diverse datasets. \symnoise{} shows an overall improvement throughout all datasets, outperforming \neftune{} on all datasets. The noise scaling factor for the Gaussian distribution is divided by $\sqrt{3}$, resulting in similar performance for both methods.}
\label{tab:LLaMA-2_GPT}
\centering
\begin{scriptsize}
\begin{tabular}{|l|c|c|c|c|c|}
\toprule
            & Alpaca & Evol-Instruct & ShareGPT & OpenPlatypus & Average \\ \midrule
\llama{}-2 7B  & 29.79  & 70.34    & 68.74              & 62.00        & 57.71   \\ 
+\neft{}       & 64.69  & 79.60    & 76.28              & 70.61        & 72.80  \\ 
+Gaussian      & 64.98  & 78.88    & 75.94             & 70.20        & 72.49  \\ 
+\symnoise{}   & \textbf{69.04}  & \textbf{81.38}    & \textbf{78.67}              & \textbf{72.23}        & \textbf{75.33}  \\ \bottomrule
\end{tabular}
\end{scriptsize}
\end{table}
\FloatBarrier 

\begin{table}
\caption{\texttt{AlpacaEval} Win Rate with and without \neftune{}, \symnoise{} on \llama{}-2, \llama{}-1, and OPT on various datasets. \symnoise{} shows improved performance across these datasets and models.}
\label{table-fig2}
\centering
\begin{scriptsize}
\setlength{\tabcolsep}{3pt} 
\begin{tabular}{|p{2.5cm}|S|S|S|S|}
\hline
Method/Dataset & {Alpaca} & {Evol-Instruct} & {OpenPlatypus} & {ShareGPT} \\ \hline
OPT-6.7B            & 41.4 & 52.2 & 45.7 & 53.4 \\ 
+\neftune{}         & 48.7 & 55.5 & 45.8 & 54.3 \\ 
+\symnoise{}        & 50.8 & 57.6 & 46.9 & 55.6 \\ \hline
\hline
LLaMA-1-7B          & 48.5 & 62.3 & 51.2 & 62.9 \\ 
+\neftune{}         & 61.7 & 67.5 & 56.9 & 63.6 \\ 
+\symnoise{}        & 64.0  & 69.8 & 58.5  & 65.4 \\ \hline
\hline
LLaMA-2-7B          & 48.3 & 62.5 & 57.2 & 63.5 \\ 
+\neftune{}         & 62.5 & 67.6 & 61.7 & 64.2 \\ 
+\symnoise{}        & 64.9 & 69.6 & 62.1 & 66.1 \\ 
\hline
\end{tabular}
\end{scriptsize}
\end{table}
\section{Background and Related Work}
In the evolving landscape of instruction finetuning for Large Language Models (LLMs), initial efforts like FLAN and T0 marked the beginning of cross-task generalization~\citep{sanh2021multitask-T0, wei2021finetuned-FLAN}. These models, involving encoder-decoder language architectures, underwent finetuning across a diverse spectrum of thousands NLP tasks. This progression, detailed in studies by~\citet{chung2022scaling-FLAN} and \citet{xu2022zeroprompt} demonstrated the adaptability of LLMs to a variety of standard NLP tasks.

Following this trajectory, OpenAI's InstructGPT~\citep{ouyang2022training-instructGPT} emerged as a pioneering model adept at handling open-ended questions with remarkable efficiency. This model, an iteration of GPT-3~\citep{brown2020language-gpt3}, incorporated reinforcement learning from human feedback (RLHF), leading to the development of advanced models like ChatGPT~\citep{OpenAI2022ChatGPT}. ChatGPT, in particular, gained widespread attention for generating more coherent and extended texts compared to InstructGPT.

Building on these developments,~\citet{wang2022self-selfinstruct} introduced the Self-Instruct approach, utilizing InstructGPT to generate instructional pairs for further finetuning of foundational models like LLaMA into specialized variants such as Alpaca~\citep{taori2023stanford-alpaca}. Concurrently, the trend towards distilled models, as discussed by~\citet{taori2023stanford-alpaca} and~\citet{xu2023wizardlm}, led to the creation of diverse datasets. These datasets, including works by~\citet{xu2023wizardlm} and~\citet{lee2023platypus}, focused on refining specific model capabilities like STEM question answering and logical reasoning. Another notable advancement was AlpaGasus by~\citet{chen2023alpagasus}, which employed a quality-filtering mechanism based on GPT-4 evaluations to enhance model performance. In a different methodology, ShareGPT, as described by \citet{Chiang2023Vicuna}, was developed through the crowd sourcing of real user interactions sourced from ChatGPT.

In the context of incorporating noise into model training, the pioneering work by~\citet{zhu2019freelb} with the FreeLB method demonstrated the effectiveness of adversarial perturbations in boosting MLM model performance. This method involved introducing calculated Gaussian perturbations into the embeddings and optimizing them to maximally impact model performance. Similar strategies were later applied in various domains, such as image captioning~\citep{nukrai2022text-cap}, point cloud processing~\citep{zang2021noise-pointcloud}, graphs~\citep{kong2022robust-adversarialnoise}, and privacy mechanisms~\citep{dwork2014algorithmic-diffprivacy}. Curvature regularization has been used in domains such as computer vision~\citep{moosavi2019robustness, lee2023explicit}, graph embedding~\citep{pei2020curvature}, and  deep neural networks~\citep{huh2020curvature}. Noise based on the Bernoulli distribution, as opposed to Gaussian or Uniform noise, has been utilized, as mentioned by \citet{spall1998implementation}. In this approach, each outcome, either $-1$ or $1$, is assigned an equal probability of $\frac{1}{2}$.

\section{On Similarity of Uniform noise and Gaussian noise} \label{sec:similarity_uniform_gussian}
In this section, we investigate the similarity between uniform and Gaussian noise when used for embedding perturbations. While these noise types yield different statistical properties in low dimensions, their behavior becomes increasingly similar as the number of dimensions grows. This phenomenon is especially relevant in the context of large language models (LLMs), where embeddings typically reside in high-dimensional spaces.

\begin{lemma}{\normalfont \textbf{(Uniform Distribution } $L_2$ \textbf{ Norm)}} \label{lem:norm_uniform}
    For $P = (x_1, x_2, ..., x_d) \sim U^d([-1, 1])$, the expected $L_2$ norm is:
    \begin{align}
    E[\|P\|_2] = \sqrt{\frac{d}{3}}.
    \end{align}
\end{lemma}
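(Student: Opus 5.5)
The natural plan is to compute the \emph{second} moment of $\|P\|_2$ exactly and then relate it to the first moment. Since the coordinates $x_1,\dots,x_d$ are i.i.d.\ uniform on $[-1,1]$, linearity of expectation gives
\begin{align}
E\bigl[\|P\|_2^2\bigr] = \sum_{i=1}^d E[x_i^2] = d \int_{-1}^{1} \frac{x^2}{2}\,dx = \frac{d}{3}.
\end{align}
This is the clean identity behind the claimed value $\sqrt{d/3}$. It also explains the $\sqrt{3}$ rescaling used for the Gaussian baseline in Table~\ref{tab:LLaMA-2_GPT}: a standard Gaussian vector has $E[\|G\|_2^2] = d$, so dividing it by $\sqrt{3}$ matches the uniform second moment.

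The main obstacle is the passage from $E[\|P\|_2^2]$ to $E[\|P\|_2]$. Read literally, the equality $E[\|P\|_2]=\sqrt{d/3}$ does not hold. By Jensen's inequality (concavity of $\sqrt{\cdot}$) we have $E[\|P\|_2] \le \sqrt{E\|P\|_2^2} = \sqrt{d/3}$, with equality only if $\|P\|_2$ is almost surely constant. That fails already for $d=1$, where $E|x_1| = 1/2 \neq 1/\sqrt{3}$. I would therefore prove the statement in the sense that is both true and useful for this section. The precise claim is that $E[\|P\|_2^2]=d/3$ exactly, and that $E[\|P\|_2] = \sqrt{d/3}\,(1 - O(1/d))$, so the ratio tends to $1$ as $d\to\infty$. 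This is the high-dimensional regime the section is concerned with.

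For the asymptotic step I would control the variance of $S=\|P\|_2^2=\sum_i x_i^2$. The $x_i^2$ are independent with $E[x_i^4]=1/5$, so $\mathrm{Var}(x_i^2)=1/5-1/9=4/45$ and $\mathrm{Var}(S)=4d/45$. Write $\mu = d/3$. The elementary inequality $\sqrt{s} \ge \sqrt{\mu} + \frac{s-\mu}{2\sqrt{\mu}} - \frac{(s-\mu)^2}{2\mu^{3/2}}$ holds for all $s\ge 0$ and can be checked directly. Taking expectations gives
\begin{align}
E\sqrt{S} \ge \sqrt{\mu} - \frac{\mathrm{Var}(S)}{2\mu^{3/2}} = \sqrt{\frac{d}{3}}\Bigl(1 - \frac{2}{5d}\Bigr).
\end{align}
Combined with the Jensen upper bound, this sandwiches $E[\|P\|_2]$ between $\sqrt{d/3}\,(1-\tfrac{2}{5d})$ and $\sqrt{d/3}$. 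Consequently $E[\|P\|_2]=\sqrt{d/3}$ up to a relative error of order $1/d$, which is negligible at embedding dimensions of several thousand. Verifying the quadratic lower bound for $\sqrt{s}$ is routine: substitute $t=\sqrt{s/\mu}$ and check a one-variable polynomial inequality in $t$. I expect the only real care needed is in stating the lemma's conclusion as this asymptotic, or second-moment, equality rather than an exact one.
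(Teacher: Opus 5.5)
Your proposal is correct, and it departs from the paper exactly where the paper's proof breaks.

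The paper computes $E[x_i^2]=1/3$ and $E[\|P\|_2^2]=d/3$ just as you do. It then writes $E[\|P\|_2]=\sqrt{E[\|P\|_2^2]}$ and stops. That interchange of $E$ and $\sqrt{\cdot}$ is invalid. By Jensen it yields only $E[\|P\|_2]\le\sqrt{d/3}$, with strict inequality for every $d$ because $\|P\|_2^2$ has variance $4d/45>0$. Your check $E|x_1|=1/2<1/\sqrt{3}$ at $d=1$ is right. So the lemma as literally stated is false, and the paper's argument actually establishes only the root-mean-square identity.

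Your route keeps that computation and supplies the missing lower bound:
\begin{itemize}
\item $E[x_i^4]=1/5$ and $\mathrm{Var}(S)=4d/45$ are correct.
\item Your global inequality for $\sqrt{s}$ does hold. With $s=\mu t^2$ it reduces to $(t^2-1)^2-(t-1)^2=(t-1)^2\,t\,(t+2)\ge 0$ for $t\ge 0$.
\item Taking expectations gives the sandwich $\sqrt{d/3}\,(1-\frac{2}{5d})\le E[\|P\|_2]\le\sqrt{d/3}$.
\end{itemize}
This gives a true, quantitative version of the lemma, with relative error below $10^{-4}$ at $d=4096$. That is all the subsequent $\sqrt{3}$ noise-matching argument actually needs.

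The same caveat applies to the paper's Gaussian lemma, where $E[\|G\|_2]<\sqrt{d}$ strictly. So the $\sqrt{3}$ ratio of expected norms is exact only for second moments and holds only asymptotically for first moments. The Bernoulli lemma alone is exact, since there $\|P\|_2=\sqrt{d}$ deterministically.
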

The proof is deferred to Appendix~\ref{sec:uniform_proof}.

\begin{lemma}{\normalfont \textbf{(Gaussian Distribution } $L_2$ \textbf{ Norm)}} \label{lem:norm_gaussian}
    For $P = (x_1, x_2, ..., x_d) \sim N^d(0, 1)$, the expected $L_2$ norm is:
    \begin{align}
    E[\|P\|_2] = \sqrt{d}.
    \end{align}
\end{lemma}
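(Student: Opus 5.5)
The plan is to go through the second moment of the norm, as one presumably does for Lemma~\ref{lem:norm_uniform}, and then be explicit about how the square root is taken. The statement holds exactly when $E[\|P\|_2]$ is read as the root-mean-square norm $\sqrt{E[\|P\|_2^2]}$. Read literally, as the expectation of the norm itself, it holds only to leading order in $d$, and the proof should say which reading it establishes.

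The first step is to write $\|P\|_2^2=\sum_{i=1}^d x_i^2$. By linearity of expectation, $E[\|P\|_2^2]=\sum_{i=1}^d E[x_i^2]$. Since $x_i\sim N(0,1)$, we have $E[x_i^2]=\mathrm{Var}(x_i)+(E[x_i])^2=1$, so $E[\|P\|_2^2]=d$. Independence of the coordinates is not needed here, only identical marginals. Taking square roots gives $\sqrt{E[\|P\|_2^2]}=\sqrt{d}$. This is the same computation as in the uniform case, where $E[x_i^2]=\int_{-1}^1 x^2\,\tfrac12\,dx=\tfrac13$ gives $\sqrt{d/3}$. The ratio of the two root-mean-square norms is therefore exactly $\sqrt{3}$, which is the scaling factor used in Table~\ref{tab:LLaMA-2_GPT}.

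The main obstacle is passing from $E[\|P\|_2^2]$ to $E[\|P\|_2]$. By Jensen's inequality for the concave square root, $E[\|P\|_2]\le\sqrt{E[\|P\|_2^2]}=\sqrt{d}$, and the inequality is strict for every finite $d$. In the Gaussian case, $\|P\|_2$ follows a chi distribution with $d$ degrees of freedom, so
\begin{align}
E[\|P\|_2]=\sqrt{2}\,\frac{\Gamma\!\left(\frac{d+1}{2}\right)}{\Gamma\!\left(\frac{d}{2}\right)}=\sqrt{d}\left(1-\frac{1}{4d}+O(d^{-2})\right).
\end{align}
For the literal reading, I would therefore state the lemma asymptotically as $E[\|P\|_2]/\sqrt{d}\to 1$, or prove the bounds $\sqrt{d}-\tfrac{1}{\sqrt{d}}\le E[\|P\|_2]\le\sqrt{d}$.

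To get the lower bound without Gamma-function asymptotics, I would use concentration. The quantity $\|P\|_2^2$ has variance $\sum_i\mathrm{Var}(x_i^2)=2d$. Combining this with $\|P\|_2-\sqrt{d}=\dfrac{\|P\|_2^2-d}{\|P\|_2+\sqrt{d}}$ shows $\mathrm{Var}(\|P\|_2)=O(1)$, and hence $\sqrt{d}-E[\|P\|_2]=\mathrm{Var}(\|P\|_2)\big/\bigl(\sqrt{d}+E[\|P\|_2]\bigr)=O(1/\sqrt{d})$. The same argument applies to the uniform case, since $\mathrm{Var}(x_i^2)=\tfrac15-\tfrac19$ there.

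In high dimension, which is the setting the section cares about, both readings give the same conclusion: the norms concentrate at $\sqrt{d}$ and $\sqrt{d/3}$ respectively.
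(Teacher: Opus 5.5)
Your proposal is correct, and the step you single out as the main obstacle is exactly where the paper's own proof fails. The paper does your second-moment computation: it obtains $E[x_i^2]=1$ as the mean of a $\chi^2_1$ variable, and it cites independence where linearity suffices. It then simply asserts $E[\|P\|_2]=\sqrt{E[\|P\|_2^2]}$. By strict Jensen that identity is false for every finite $d$; already at $d=1$ one has $E|x_1|=\sqrt{2/\pi}\approx 0.80$.

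So the paper's argument establishes only the root-mean-square reading, which you prove the same way. The literal statement is false as written. Your chi-mean formula, with expansion $\sqrt{d}\,(1-\frac{1}{4d}+O(d^{-2}))$, is the correct replacement. It shows that the factor $\sqrt{3}$ the paper relies on is exact for root-mean-square norms but holds for expected norms only as $d\to\infty$. The uniform lemma has the same defect; the Bernoulli lemma is exact, because there $\|P\|_2=\sqrt{d}$ surely.

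\medskip

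One quantitative slip: the concentration argument, carried out as you describe it, does not give the constant $1$ you announce. It yields $\mathrm{Var}(\|P\|_2)\le E[(\|P\|_2-\sqrt{d})^2]\le E[(\|P\|_2^2-d)^2]/d=2$, and hence only $E[\|P\|_2]\ge\sqrt{d}-2/\sqrt{d}$. To get $\sqrt{d}-1/\sqrt{d}$, either of the following works:
\begin{itemize}
\item Use the Gaussian Poincar\'e inequality: $\|\cdot\|_2$ is $1$-Lipschitz, so $\mathrm{Var}(\|P\|_2)\le 1$.
\item Write $\mu_k$ for the mean of the chi distribution with $k$ degrees of freedom. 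The identity $\mu_d\mu_{d+1}=d$ together with $\mu_{d+1}\le\sqrt{d+1}$ gives $\mu_d\ge d/\sqrt{d+1}\ge\sqrt{d}-1/\sqrt{d}$.
\end{itemize}
Also note that $\mathrm{Var}(\|P\|_2^2)=2d$ does use independence of the coordinates, unlike your first-moment step.
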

The proof is deferred to Appendix~\ref{sec:normal_proof}.


\begin{figure}[htbp]
    \centering
    \begin{subfigure}[b]{0.40\textwidth}
        \includegraphics[width=\textwidth]{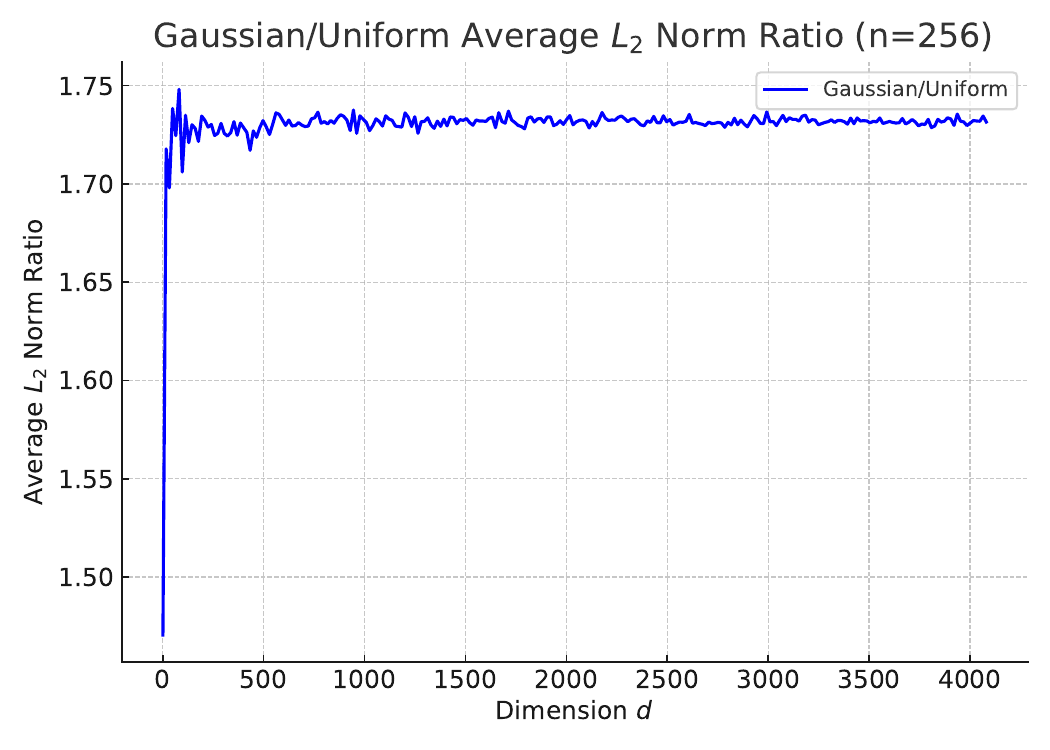}
        \caption{Gaussian/Uniform average $L_2$ ratio}
        \label{fig:gaussian_uniform_ratio_dim}
    \end{subfigure}
    \begin{subfigure}[b]{0.40\textwidth}
        \includegraphics[width=\textwidth]{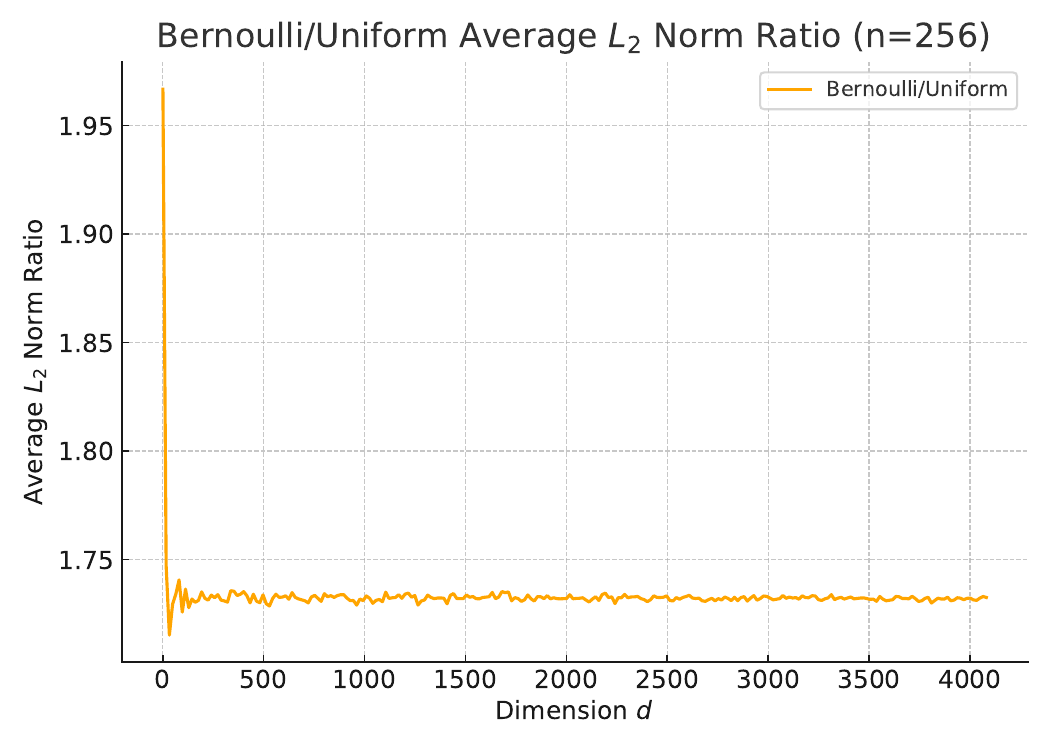}
        \caption{Bernoulli/Uniform average $L_2$ ratio}
        \label{fig:bernoulli_uniform_ratio_dim}
    \end{subfigure}
    \caption{Comparison of average $L_2$ norm ratios for Gaussian and Bernoulli noise relative to Uniform noise as a function of dimensionality.}
    \label{fig:noise_norm_ratios}
\end{figure}

Drawing from Lemma~\ref{lem:norm_uniform} and Lemma~\ref{lem:norm_gaussian}, it is apparent that the expected noise from the Gaussian distribution is $\sqrt{3}$ times that of the Uniform distribution. Consequently, to equate the noise scales for comparison, the noise scaling factor for the Gaussian distribution should be adjusted by a factor of $\sqrt{3}$.

As depicted in Figure~\ref{fig:gaussian_uniform_ratio_dim}, a distinct pattern emerges in the ratio of average noise (quantified via $L_2$ norms) between Gaussian and Uniform distributions as dimensionality increases. Notably, the relative impact of Gaussian noise amplifies, approximating $\sqrt{3}$ times the effect induced by Uniform noise with increasing dimensions. An in-depth exploration and analysis concerning the influence of altering the sample size, while maintaining a fixed dimensionality, are detailed in Appendix~\ref{sec:highdim_analysis}.

Moreover, the comparative results on real-world datasets are presented in Table~\ref{tab:LLaMA-2_GPT}, where all conditions are held constant except for the substitution of the Uniform distribution with a Gaussian distribution. In this context, the noise scaling factor for the Gaussian distribution is adjusted by a factor of $\sqrt{3}$, consistent with the discussion above, and one can notice that the performance of both methods thereafter is comparable. A more detailed ablation study is given in the Sec.~\ref{noise_ablation}.

\section{Proposed Method: \symnoise{}}
In the ideal scenario, our goal is to implement curvature regularization, a technique prevalent in fields such as computer vision~\citep{moosavi2019robustness, lee2023explicit}, graph embedding~\citep{pei2020curvature}, and deep neural networks~\citep{huh2020curvature}. However, due to the high computational cost associated with these methods, we aim to explore an alternative approach that adheres to a more stringent condition. This approach has demonstrated superior performance in practice, surpassing current state-of-the-art methodologies. Specifically, we seek to design a function with a gradient~($\nabla f$) having value as $0$ in the vicinity of the input, i.e., \text{for} $x, \epsilon \in \mathbb{R}^{d}$, $\nabla f = \frac{\left|{\frac{f{\left((x - \epsilon ) \right)} - f{\left(x + \epsilon \right)}}{\epsilon}}\right|}{2} \leq \delta $, when $\delta=0$, we have $f{\left(x + \epsilon  \right)} = f{\left(x - \epsilon  \right)}.$\\
\\
In this formulation, the noise turns out to be based on a Bernoulli distribution, diverging from the more commonly used Gaussian or Uniform noise types. Specifically, it uses values of $-1$ and $1$ with equal probability, as in~\citet{spall1998implementation}, to provide a balanced and predictable effect on the network's learning.


Following \citet{jain2023neftune}, we train instruction-based models using instruction-response pairs. Unlike \neftune{}, which adds uniform noise to token embeddings, we introduce symmetric Bernoulli noise. While we retain the same noise scaling factor, $\epsilon = \alpha / \sqrt{Ld}$ (with $L$ as sequence length, $d$ as embedding dimension, and $\alpha$ as a tunable parameter), our method differs in how noise is applied. Details of our approach, \symnoise{}, are provided in Algorithm~\ref{alg:symnoise}, alongside \neftune{} in Algorithm~\ref{alg:NEFTune} for comparison.

\subsection{On Similarity of Uniform noise and Bernoulli noise} \label{sec:similarity_uniform_burnollie}


\begin{lemma}{\normalfont \textbf{(Bernoulli Distribution } $L_2$ \textbf{ Norm)}} \label{lem:norm_bernoulli}
    For $P = (x_1, x_2, ..., x_d)$, with $x_i \in \{-1, 1\}$ and $P(x_i = 1) = P(x_i = -1) = 0.5$, the expected $L_2$ norm is:
    \begin{align}
    E[\|P\|_2] = \sqrt{d}.
    \end{align}
\end{lemma}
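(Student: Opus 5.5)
The statement is in fact exact and deterministic, which makes it easier than Lemmas~\ref{lem:norm_uniform} and~\ref{lem:norm_gaussian}: since every coordinate satisfies $x_i\in\{-1,1\}$, we have $x_i^2=1$ with probability one. The plan is to compute $\|P\|_2$ pointwise and only then take the expectation.

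The key steps are as follows.
\begin{enumerate}
\item For every realization of $P$, $\sum_{i=1}^d x_i^2 = d$. Hence $\|P\|_2=\sqrt{d}$ almost surely, regardless of the signs.
\item The random variable $\|P\|_2$ is therefore the constant $\sqrt{d}$. By linearity and monotonicity of expectation, $E[\|P\|_2]=\sqrt{d}$ exactly.
\end{enumerate}
Independence of the coordinates and the symmetry $P(x_i=1)=P(x_i=-1)=\tfrac12$ are not needed. Only the support $\{-1,1\}$ matters.

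It is worth contrasting this with the earlier lemmas, where $\|P\|_2$ is genuinely random. There one can only compute $E[\|P\|_2^2]$ exactly, which equals $d/3$ and $d$ respectively. Passing to $E[\|P\|_2]$ then requires Jensen's inequality, which gives an upper bound, or a concentration argument. Those results are therefore approximations for large $d$; for instance, the Gaussian case gives $\sqrt{2}\,\Gamma(\tfrac{d+1}{2})/\Gamma(\tfrac d2)<\sqrt d$. No such obstacle arises here.

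I would conclude by observing that, combined with Lemma~\ref{lem:norm_uniform}, the Bernoulli perturbation has norm $\sqrt{3}$ times the expected uniform norm. This matches Figure~\ref{fig:bernoulli_uniform_ratio_dim}.

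There is no real obstacle in this proof. The only point to state explicitly is that the identity holds surely rather than only in expectation. That is stronger than what is claimed and gives the equality immediately.
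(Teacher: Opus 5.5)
Your proof is correct, and it takes a different route from the paper's. The paper uses the same three-line template as for Lemmas~\ref{lem:norm_uniform} and~\ref{lem:norm_gaussian}. It notes $E[x_i^2]=1$, invokes independence to get $E[\|P\|_2^2]=d$, and then writes $E[\|P\|_2]=\sqrt{E[\|P\|_2^2]}$ without justification. That interchange of square root and expectation fails in general: Jensen only gives $\le$, with equality if and only if $\|P\|_2$ is almost surely constant. It is valid here precisely because of the fact you lead with, namely $\|P\|_2=\sqrt{d}$ surely. Your pointwise argument therefore supplies the justification the paper's proof omits. It also shows that independence and the $\tfrac12$ probabilities play no role, and it explains why this lemma is exact while the other two hold only asymptotically. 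The paper's route gains nothing beyond uniformity with a template that is actually inexact in the Gaussian and uniform cases. One small adjustment to your closing remark: by your own observation $E[\|P\|_2]<\sqrt{d/3}$ for uniform noise. So the Bernoulli-to-uniform ratio of expected norms is strictly larger than $\sqrt{3}$ and only tends to $\sqrt{3}$ as $d\to\infty$; that sentence should say ``approximately $\sqrt{3}$ for large $d$.''
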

The proof is deferred to Appendix~\ref{sec:burnollie_proof}.


In alignment with the discussions in Sec.~\ref{sec:similarity_uniform_gussian} and corroborated by Lemma\ref{lem:norm_uniform}, Lemma~\ref{lem:norm_gaussian}, and Fig~\ref{fig:bernoulli_uniform_ratio_dim}, it is evident that the noise induced by the Bernoulli distribution is amplified by a factor of $\sqrt{3}$ compared to that of the Uniform distribution. To accommodate this disparity, our proposed method \symnoise{} incorporates this $\sqrt{3}$ scaling factor, as detailed in Algorithm~\ref{alg:symnoise}.

\begin{algorithm}[ht]
   \caption{\neftune{}: \textbf{N}oisy \textbf{E}mbedding Instruction \textbf{F}ine\textbf{tun}ing~(Taken from the paper~\citep{jain2023neftune})} \label{alg:NEFTune}
\vspace{0.2cm}
\begin{algorithmic}
\STATE \textbf{Input:} $\mathcal{D}=\{x_i,y_i\}_1^N$ tokenized dataset, embedding layer $\text{emb}(\cdot)$, rest of model $f_{/\text{emb}}(\cdot)$, \\ model parameters $\theta$, $\text{loss}(\cdot)$, optimizer $\text{opt}(\cdot)$
\STATE Hyperparameter: base noise scale $\alpha \in \mathbb{R^+}$
\STATE Initialize $\theta$ from a pretrained model.
\vspace{0.2cm}
\REPEAT
\STATE $(X_i,Y_i) \sim \mathcal{D}$
\COMMENT{sample a minibatch of data and labels}
\STATE $X_{\text{emb}} \gets \text{emb}(X_i), \mathbb{R}^{B\times L \times d}$
\COMMENT{batch size $B$, seq. length $L$, embedding dimension $d$}
\STATE $\epsilon \sim \text{Uniform}(-1,1), \mathbb{R}^{B\times L \times d}$ \COMMENT{sample a noise vector}
\STATE $X_{\text{emb}}' \gets X_{\text{emb}} + (\frac{\alpha}{\sqrt{Ld}}) \epsilon $ \COMMENT{add scaled noise to embeds \footnote{If sequence lengths in a batch are not equivalent, then $L$ is a vector $\in \mathbb{Z}_{>0}^{B}$ and the scaling factor $(\alpha/\sqrt{Ld})$ is computed independently for each sequence in batch.}}
\STATE $\hat{Y}_i \gets f_{/\text{emb}}(X_{\text{emb}}')$ \COMMENT{make prediction at noised embeddings}
\STATE $\theta \gets \text{opt}(\theta, \text{loss}(\hat{Y}_i,Y_i)) $ \COMMENT{train step, e.g., grad descent}
\UNTIL{Stopping criteria met/max iterations.}
\end{algorithmic}
\end{algorithm}

\begin{algorithm}[ht]
   \caption{\symnoise{}: \textbf{Sym}metric \textbf{Nois}y \textbf{E}mbedding Instruction Finetuning~(Proposed Method)} \label{alg:symnoise}
\vspace{0.2cm}
\begin{algorithmic}
\STATE \textbf{Input:} $\mathcal{D}=\{x_i,y_i\}_1^N$ tokenized dataset, embedding layer $\text{emb}(\cdot)$, rest of model $f_{/\text{emb}}(\cdot)$, \\ model parameters $\theta$, $\text{loss}(\cdot)$, optimizer $\text{opt}(\cdot)$
\STATE  Hyperparameter: base noise scale $\alpha \in \mathbb{R^+}$
\STATE Initialize $\theta$ from a pretrained model.
\vspace{0.2cm}
\REPEAT
\STATE $(X_i,Y_i) \sim \mathcal{D}$
\COMMENT{sample a minibatch of data and labels}
\STATE $X_{\text{emb}} \gets \text{emb}(X_i), \mathbb{R}^{B\times L \times d}$
\COMMENT{batch size $B$, seq. length $L$, embedding dimension $d$}
\STATE $\epsilon \sim \text{Bernoulli}\{-1,1\}, \mathbb{R}^{B\times L \times d}$ \COMMENT{sample a noise vector}
\STATE $X_{\text{emb}}' \gets X_{\text{emb}} + (\frac{\alpha}{\sqrt{Ld}}) \frac{\epsilon}{\sqrt{3}} $ \COMMENT{add scaled noise to embeds \footnote{If sequence lengths in a batch are not equivalent, then $L$ is a vector $\in \mathbb{Z}_{>0}^{B}$ and the scaling factor $(\alpha/\sqrt{Ld})$ is computed independently for each sequence in batch.}}
\STATE $X_{\text{emb}}'' \gets X_{\text{emb}} - (\frac{\alpha}{\sqrt{Ld}}) \frac{\epsilon}{\sqrt{3}} $ \COMMENT{subtract same symmetric noise from embeds}

\STATE $\hat{Y}_i \gets f_{/\text{emb}}(concat(X_{\text{emb}}', X_{\text{emb}}''))$ \COMMENT{make prediction at noised embeddings}
\STATE $\theta \gets \text{opt}(\theta, \text{loss}(\hat{Y}_i,Y_i)) $ \COMMENT{train step}
\UNTIL{Stopping criteria met/max iterations.}
\vspace{0.1cm}
\end{algorithmic}
\end{algorithm}

\section{Experiments}

In this section, we perform numerous experiments across various models and benchmarks to demonstrate the efficacy of our proposed method~\symnoise{} and compare it with existing approaches including~\neft{}.

\subsection{Datasets}
In this section, we delve into finetuning datasets that have either gained widespread popularity or have recently achieved state-of-the-art results. Due to the memory limitations of our hardware setup, our focus is exclusively on single-turn datasets following similar protocol as used in~\citet{jain2023neftune}. The chosen datasets are:
Alpaca~\citep{taori2023stanford-alpaca}, ShareGPT~\citep{Chiang2023Vicuna}, Evol-Instruc~\citep{xu2023wizardlm}, and Open-Platypus~\citep{lee2023platypus}. More details about these datasets are in Appendix~\ref{app:data_sets}



In the fine-tuning phase, each model, with the exception of ShareGPT, utilizes the prompt from the Alpaca system. Conversely, ShareGPT is fine-tuned using the prompt from the Vicuna system. Our approach to hyperparameter tuning, including the selection of values, aligns with the methodologies suggested by \citet{jain2023neftune}. We adhered strictly to the same set of hyperparameters as those employed in \neftune{}~\citep{jain2023neftune}.

\subsection{Models}
Following~\citet{jain2023neftune} setup for experimentation, our experiments predominantly utilize Large Language Models (LLMs) with a parameter size of $7$ billion. Specifically, our focus is on models such as \llama{}-1~\citep{touvron2023llama-1}, \llama{}-2~\citep{touvron2023llama-2}, and OPT-6.7B~\citep{zhang2022opt}. These transformer-based models primarily differ in the amount of training data they've been exposed to, with OPT-6.7B, \llama{}-1, and \llama{}-2 being trained on $180$ billion, $1$ trillion, and $2$ trillion tokens, respectively. This variance in training data volume is expected to manifest in their performance across benchmarks like MMLU, where \llama{}-2 typically outperforms the others. 

\subsection{Evaluation Protocols}
Our experimental framework, adapted from the original \neftune{}~\citep{jain2023neftune} setup, primarily utilizes single-turn data for training. We assess the models' conversational skills using \texttt{AlpacaEval} and examine their performance on tasks from the OpenLLM Leaderboard. This was done to verify that the introduction of our \texttt{symnoise} augmentation does not negatively impact the models' performance on standard multiple-choice tasks. Notably, the results demonstrate that our augmented models consistently outperform the original \texttt{neftune} models, albeit by a modest margin.

\begin{itemize}
    \item \textbf{AlpacaEval}: Introduced by~\citet{dubois2023alpacafarm-rlhf}, \texttt{AlpacaEval} is crucial for appraising generative quality. It functions as an automated model-based evaluator, comparing Text-Davinci-003's generations with our model's over $805$ prompts, focusing on the \textit{Win Rate}. The Win Rate indicates how often our model is preferred over Text-Davinci-003, as judged by model evaluator~(GPT-4, ChatGPT etc). The dataset's $805$ test prompts, sourced from various platforms, ensure a comprehensive testing scope. \texttt{AlpacaEval's} high human agreement rate~\citep{dubois2023alpacafarm-rlhf}, validated by 20K annotations, highlights its usefulness and accuracy. We employ both GPT-4 and ChatGPT as evaluators, using ChatGPT initially due to GPT-4's API limitations and costs.

    \item \textbf{Hugging Face OpenLLM Leaderboard}: For leaderboard assessments, datasets like ARC~\citep{clark2018think-arc}, HellaSwag~\citep{zellers2019hellaswag-swag}, and MMLU~\citep{hendrycks2020measuring-mulu} are utilized. These verbalized multiclass classification datasets test the LLM's capability in factual questioning and reasoning. Our evaluations confirm that the \symnoise{} method does not diminish the models' proficiency in these domains.
\end{itemize}

\subsection{Results}
The methodology we employed for tuning hyperparameters and choosing their values adheres closely to the protocols proposed by \citet{jain2023neftune}. Specifically, we meticulously adopted the identical hyperparameter set as delineated in \neftune{} by \citet{jain2023neftune}.
\subsubsection{Improvement in generated text quality}
Our results demonstrate an enhanced metric performance compared to~\neftune{} in terms of generated text quality. As evident from Table~\ref{tab:LLaMA-2_GPT}, there is a notable improvement across all datasets at the 7B size, with an average increase of $17.6\%$ (compared to~\neftune{}'s improvement of $15.1\%$). This indicates that the implementation of \symnoise{} significantly enhances conversational capabilities and answer quality. These findings are supported by evaluations using \texttt{AlpacaEval}, where \symnoise{} notably outperforms \neftune{}. Furthermore, as shown in Table \ref{table-fig2}, enhancements are also observed in older models like \llama{}-1 and OPT-6.7B, with \symnoise{} consistently surpassing \neftune{} in these models as well. An interesting observation is the comparatively smaller gain by \neftune{} in ShareGPT, as per ChatGPT's analysis, a trend not mirrored in GPT-4's evaluation. However, \symnoise{} consistently excels over \neftune{} for ShareGPT in evaluations by both GPT-4 and ChatGPT. In Table \ref{tab:LLaMA-2_GPT}, the Win Rate shows a significant increase from $29.79\%$ to $69.04\%$ for Alpaca, thereby outperforming the state-of-the-art method \neftune{} by $6.7\%$.

\subsubsection{Improvement in \textit{OpenLLM Leaderboard} tasks}
In addressing the potential that \symnoise{} could enhance conversational abilities at the expense of traditional skills, we conducted evaluations using tasks from the OpenLLM Leaderboard. Employing the LM-Eval Harness framework \citep{gao2021framework-lm-eval-harness}, we assessed our model's performance on benchmarks such as MMLU~\citep{hendrycks2020measuring-mulu}, ARC~\citep{clark2018think-arc}, and HellaSwag~\citep{zellers2019hellaswag-swag}. These tests shed light on the model’s knowledge base, reasoning capabilities, and adherence to factual information. As illustrated in Figure~\ref{tab:fig3_openllm}, the results indicate that \symnoise{} not only stabilizes scores but also actively preserves and, in some cases, enhances the model's capabilities. Notably, \symnoise{} consistently outperforms \neftune{} in terms of performance, highlighting its effectiveness in striking a balance between conversational proficiency and traditional computational skills.

\begin{table}
\centering
\caption{For OpenLLM Leaderboard tasks, the influence of \neftune{} and \symnoise{} is investigated on \llama{}-2, encompassing Alpaca, Evol-Instruct, and OpenPlatypus datasets, alongside \llama{}-1 trained on the Evol-Instruct dataset. Comparative observations reveal a uniformity in performance metrics across the diverse datasets and models, indicating negligible impact of \neftune{} but slightly better performance of \symnoise{} on the overall effectiveness. We follow the similar procedure as mentioned in ~\citet{jain2023neftune}, and report their results for completeness. In order to minimize computational expenses, we refrained from conducting thorough hyper-parameter optimization, which may have further improved the results.}
\label{tab:fig3_openllm}
\begin{tiny}
\begin{tabular}{lccc}
\toprule
Task & Llama-2 7B (Alpaca) & +NEFT & +SymNoise \\
\midrule
ARC & 56.4 & 56.1 & 56.0 \\
HellaSwag & 80.0 & 80.1 & 80.2 \\
MMLU & 47.9 & 47.7 & 47.9 \\
\midrule
& Llama-2 7B (OpenPlatypus) & +NEFT & +SymNoise \\
\midrule
ARC & 54.2 & 55.4 & 55.7 \\
HellaSwag & 80.4 & 80.6 & 80.8 \\
MMLU & 43.9 & 45.3 & 45.5 \\
\midrule
 & Llama-1 7B (Evol-Instruct.) & +NEFT & +SymNoise \\
\midrule
ARC & 53.7 & 54.1 & 56.0 \\
HellaSwag & 77.9 & 78.0 & 78.9 \\
MMLU & 38.3 & 38.4 & 39.1 \\
\bottomrule
\end{tabular}
\end{tiny}
\end{table}

\subsection{Analysis}

As shown in \neftune{}~\citep{jain2023neftune} and related work, adding noise to embeddings during training helps mitigate overfitting to dataset-specific quirks like formatting or phrasing. This shifts the model from memorizing instructions to leveraging the broader capabilities of the pretrained base model. A direct effect is that models produce longer, more coherent responses—preferred by both human and automated evaluators~\citep{dubois2023alpacafarm-rlhf}. While increased verbosity contributes to performance gains, our analysis shows that \symnoise{} improves both response quality and quantity beyond what \neftune{} achieves.

Conceptually, \symnoise{} assigns probability mass to multiple noisy variants of instructions, encouraging the model to learn a broader, more uniform distribution rather than overfitting to the training data or a single perturbed version. This promotes better generalization and reduces overfitting.


\subsubsection{Longer responses vs repetition}\label{length_vs_rep}
In this section, our objective is to determine whether the lengthier responses produced using \symnoise{} are a result of increased repetition or if they contribute to more diverse and detailed content. 

Echoing the insights from \citet{jain2023neftune} and supporting evidence from leaderboard performances, a notable correlation emerges between extended response lengths and improved performance in the \texttt{AlpacaEval} task. This raises the question of whether the augmentation of response length by \symnoise{} could lead to diminished text diversity and quality. Our analysis scrutinized the frequency of N-gram repetitions in responses generated by \llama{}-2, trained on various datasets, both with and without \symnoise{} application.

Following the methodology of \citet{jain2023neftune}, our analysis was restricted to the initial segments of each response to maintain consistency. Specifically, we examined the first $50$ words for Alpaca-trained models, $100$ words for Evol-Instruct, and $150$ words for OpenPlatypus, ensuring that at least half of the responses exceeded these thresholds. Responses shorter than these limits were excluded from the analysis.


As delineated in Table \ref{tab:length_vs_rep}, the findings reveal that \symnoise{} typically yields lengthier responses. However, importantly, the frequency of 2-gram repetitions and overall token log-diversity remain largely consistent, paralleling the results observed with \neftune{}. This suggests that the increased length of responses under \symnoise{} is not simply due to repetitive content, but rather indicates the inclusion of additional, relevant information, thereby enriching the depth and value of the generated responses.

\subsubsection{Ablation study with different strength of noise}\label{noise_ablation}
In this section, we explored the efficacy of employing different noise distributions, specifically uniform~(\neftune{}) versus Gaussian noise, versus within the \symnoise{}{} algorithm. From the Table~\ref{tab:noise_ablation}, one can notice that Gaussian noise tends to produce longer outputs. However, this increased length does not correlate with a corresponding enhancement in performance. While it is generally observed that longer generations are associated with improved scoring, none of the generation-time strategies employed matched the effectiveness of models trained with \symnoise{}. Interestingly, our innovative approach, \symnoise{}, exhibits superior performance, surpassing benchmark results. It demonstrates an approximate improvement of $6.7\%$ over the models utilizing \neftune{}. Furthermore, we conducted a comparative analysis with Bernoulli noise to underscore the effectiveness of the symmetric opposing noise component in \symnoise{}.

Moreover, we maintained consistent experimental conditions while substituting the Uniform distribution with a Gaussian distribution. In alignment with our theoretical framework, we adjusted the noise scaling factor for the Gaussian distribution by dividing it by $\sqrt{3}$. This adjustment led to comparable performance between the two methods across various \neftune{} noise levels, reinforcing the validity of our noise scaling approach.

\begin{table*}[htbp]
\centering
\begin{minipage}[t]{0.35\textwidth}
\centering
\caption{\texttt{AlpacaEval} Win Rate and Average Character Count assessed by ChatGPT across various noise settings.}
\label{tab:noise_ablation}
\begin{scriptsize}
\begin{tabular}{|p{1.5cm}|p{1.5cm}|p{1.5cm}|p{1.5cm}|} \hline
Setting & Alpaca & Evol-Instruct & OpenPlatypus \\ \hline
\llama{}-2-7b & 48.26 (375) & 62.55 (864) & 57.20 (1101) \\ \hline
+\neft{} \\ Noise 5 & 62.55 (1062) & 67.58 (1404) & 60.99 (1428) \\ \hline
+\neft{}  \\ Noise 10 & 61.18 (1010) & 65.59 (1697) & 60.62 (1834) \\ \hline
+\neft{}   \\ Noise 15 & 61.86 (820) & 66.58 (1651) & 61.74 (1694) \\ \hline
+Gaussian Noise 5/$\sqrt{3}$ & 62.6 (1073) & 68.01 (1431) & 60.31 (1437) \\ \hline
+Gaussian Noise 10/$\sqrt{3}$ & 61.01 (1211) & 65.29 (1783) & 60.32 (1878) \\ \hline
+Gaussian Noise 15/$\sqrt{3}$ & 61.93 (835) & 65.99 (1767) & 61.38 (1806) \\ \hline
+Gaussian Noise 5 & 60.93 (1371) & 65.09 (2066) & 59.13 (2061) \\ \hline
+Bernoulli Noise 5 & 61.32 (1272) & 65.10 (1840) & 60.22 (1968) \\ \hline
+\symnoise{} Noise 5 & \textbf{64.92} (1186) & \textbf{69.62} (1700) & \textbf{62.14} (1689) \\ \hline
\end{tabular}
\end{scriptsize}
\end{minipage}
\hfill
\begin{minipage}[t]{0.44\textwidth}
\centering
\caption{Transposed view of average lengths and 2-gram repetition rates in \texttt{AlpacaEval} responses for different training methods.}
\label{tab:length_vs_rep}
\begin{scriptsize}
\begin{tabular}{|l|c|c|c|}
\hline
Metric & \llama{}-2 7B & +\neft{} & +\symnoise{} \\ \hline
\multicolumn{4}{|c|}{\textbf{Character Length}} \\ \hline
Alpaca & 375 & 1061 & 1186 \\
Evol-Instruct & 864 & 1403 & 1700 \\
OpenPlatypus & 1100 & 1694 & 1689 \\ \hline
\multicolumn{4}{|c|}{\textbf{Whitespace Length}} \\ \hline
Alpaca & 60 & 169 & 176 \\
Evol-Instruct & 138 & 225 & 245 \\
OpenPlatypus & 170 & 264 & 270 \\ \hline
\multicolumn{4}{|c|}{\textbf{2-Gram Repetition \%}} \\ \hline
Alpaca & 1.49 & 1.72 & 1.80 \\
Evol-Instruct & 3.87 & 3.79 & 3.80 \\
OpenPlatypus & 2.73 & 3.21 & 3.30 \\ \hline
\end{tabular}
\end{scriptsize}
\end{minipage}
\end{table*}

\section{CONCLUSION}
In this work, we rigorously establish that Gaussian and uniform random noise are functionally analogous, contingent on appropriate scaling, and demonstrate similar effectiveness on real-world datasets. This revelation is pivotal, particularly in light of the \neftune{} creators' admission of the method's unexplained superiority, especially over the well-examined Gaussian noise. This insight not only sheds light on the previously opaque superiority of the \neftune{} method but also bridges the gap with the well-understood Gaussian noise.

Furthermore, we have introduced \symnoise{}, a novel noise injection technique that outperforms \neftune{} and other existing methods by a large margin. The advancements showcased by \symnoise{} in training large language models (LLMs) emphasize the importance of innovative algorithmic strategies and regularization techniques. Echoing the sentiments of \citep{jain2023neftune}, the field of LLMs, unlike its counterpart in computer vision, has often favored standardized training methods focusing on model scaling and dataset expansion. However, \symnoise{} underscores the potential of fine-tuning techniques in enhancing model performance, particularly in situations where overfitting to limited instruction datasets is a concern.


\pagebreak

\bibliography{colm2025_conference}
\bibliographystyle{colm2025_conference}

\appendix
\section{Appendix}

\subsection{Datasets} \label{app:data_sets}
\begin{itemize}
  \item \textbf{Alpaca~\cite{taori2023stanford-alpaca}}: Developed using the Self-Instruct method by~\citet{wang2022self-selfinstruct} and the Text-Davinci-003~\citet{ouyang2022training-instructGPT} model (Ouyang et al., 2022), Alpaca leverages a small set of seed tasks to generate new instruction tuning tasks and filter out ineffective ones. This dataset has been instrumental in advancing instruction-based learning.
  \item \textbf{ShareGPT~\cite{Chiang2023Vicuna}}: Comprising 70k voluntarily shared ChatGPT conversations~\cite{ShareGPT2023}, ShareGPT provides a rich source of real-world interaction data. While originally multi-turn, we adapt it to a single-turn format using the Vicunav1.1 dataset version for consistency with our experimental setup.
  \item \textbf{Evol-Instruc~\cite{xu2023wizardlm}}: This dataset, comprising 70k single-turn instructions, is considered more complex than Alpaca. Originating from the Alpaca dataset, Evol-Instruct employs ChatGPT to refine and evolve the initial instructions, thus broadening the scope and complexity of the tasks.
  \item \textbf{Open-Platypus~\cite{lee2023platypus}}: Formed by combining $11$ open-source datasets, Open-Platypus is tailored to enhance LLM performance in STEM and logical reasoning domains. It includes approximately $25$k questions, with around $10$\% generated by LLMs and the rest by human experts. This dataset emphasizes the importance of variety and complexity in question formats.

\end{itemize}

\subsection{Analysis of Distributional Characteristics in High-Dimensional Spaces} \label{sec:highdim_analysis}

In this section, we analyze the behavior of Gaussian, Bernoulli, and Uniform distributions in high-dimensional spaces. We explore how the average \(L_2\) norm ratios of these distributions change with respect to varying dimensions and sample sizes, providing insights into their geometric properties and implications for high-dimensional data analysis.

\subsubsection{Average \(L_2\) Norm Ratio with Varying Dimensionality}

\begin{figure}[htbp]
\centering
\includegraphics[width=0.8\textwidth]{figs/gaussian_uniform_ratio_dim_sampled_plot_2.pdf}
\caption{Gaussian/Uniform Average \(L_2\) Norm Ratio as a Function of Dimensionality. The plot illustrates the ratio of the average \(L_2\) norm of points drawn from a Gaussian distribution to that of a Uniform distribution, with the number of points fixed at 256 and the dimensionality varying from 1 to 4096.}
\label{fig:gaussian_uniform_ratio_dim_appndx}
\end{figure}


\begin{figure}[htbp]
\centering
\includegraphics[width=0.8\textwidth]{figs/bernoulli_uniform_ratio_dim_sampled_plot_1.pdf}
\caption{Bernoulli/Uniform Average \(L_2\) Norm Ratio as a Function of Dimensionality. The plot depicts the ratio of the average \(L_2\) norm of points drawn from a Bernoulli distribution to that of a Uniform distribution, with the number of points fixed at 256 and the dimensionality varying from 1 to 4096.}
\label{fig:bernoulli_uniform_ratio_dim_appndx}
\end{figure}


\subsubsection{Average \(L_2\) Norm Ratio with Varying Number of Points}

\begin{figure}[htbp]
\centering
\includegraphics[width=0.8\textwidth]{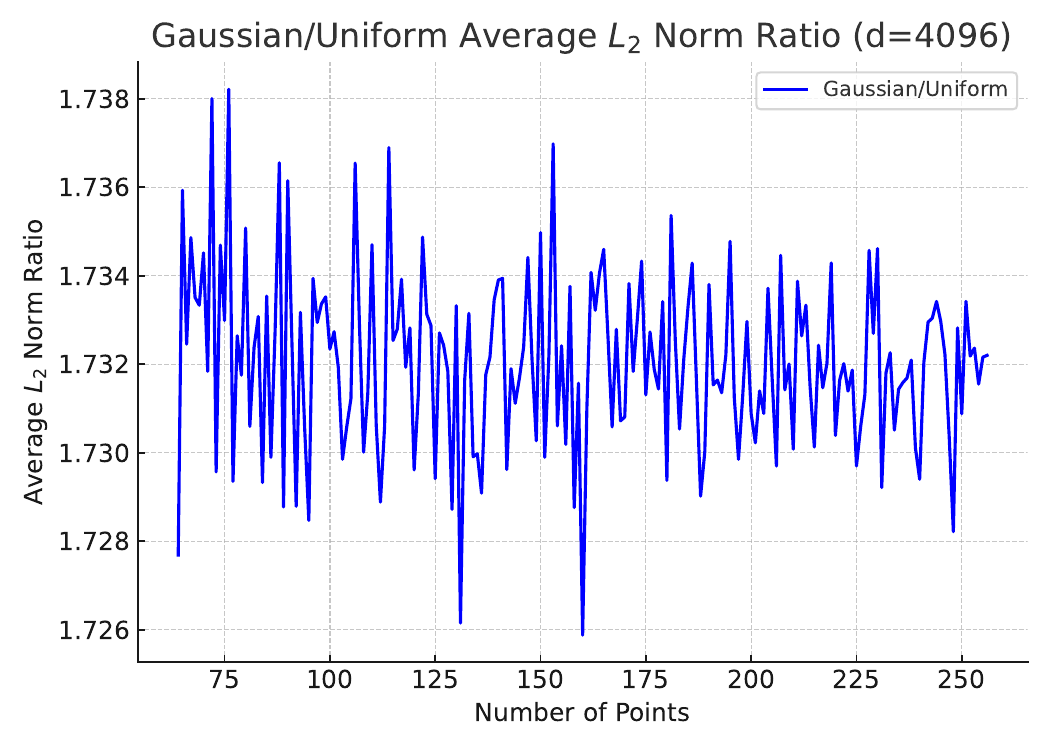}
\caption{Gaussian/Uniform Average \(L_2\) Norm Ratio for Varying Number of Points. The plot illustrates the ratio of the average \(L_2\) norm of points drawn from a Gaussian distribution to that of a Uniform distribution, with the dimensionality fixed at 4096 and the number of points varying from 64 to 256.}
\label{fig:gaussian_uniform_ratio_points_appndx}
\end{figure}


\begin{figure}[htbp]
\centering
\includegraphics[width=0.8\textwidth]{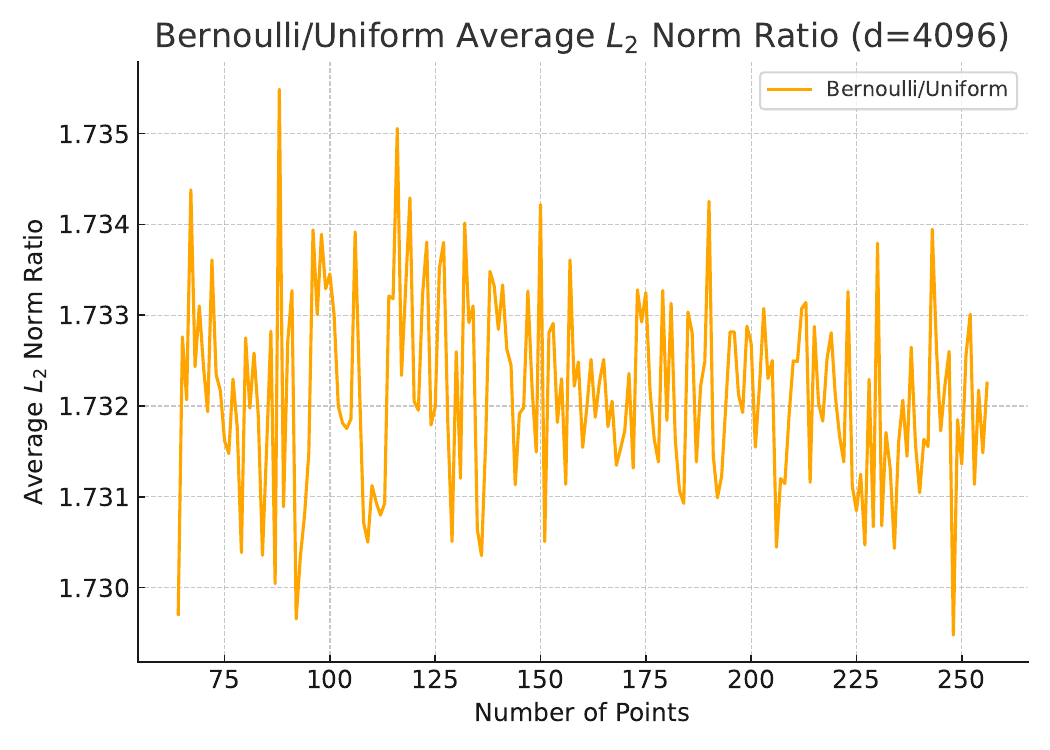}
\caption{Bernoulli/Uniform Average \(L_2\) Norm Ratio for Varying Number of Points. The plot depicts the ratio of the average \(L_2\) norm of points drawn from a Bernoulli distribution to that of a Uniform distribution, with the dimensionality fixed at 4096 and the number of points varying from 64 to 256.}
\label{fig:bernoulli_uniform_ratio_points_apnx}
\end{figure}




\pagebreak
\section{Deferred proofs}
        In this section, we show the proofs omitted from Sec.~\ref{sec:similarity_uniform_gussian} and Sec.~\ref{sec:similarity_uniform_burnollie}.

\subsubsection{Proof of Lemma~\ref{lem:norm_uniform}} \label{sec:uniform_proof}
We state again Lemma~\ref{lem:norm_uniform} from Sec.~\ref{sec:similarity_uniform_gussian} and present the proof.

\begin{customlemma}{\ref{lem:norm_uniform}}{\normalfont \textbf{(Uniform Distribution } $L_2$ \textbf{ Norm)}} \nonumber
For $P = (x_1, x_2, ..., x_d) \sim U^d([-1, 1])$, the expected $L_2$ norm is:
\begin{align}
E[\|P\|_2] = \sqrt{\frac{d}{3}}.
\end{align}
    \textbf{Proof:} 
    Each $x_i$ is uniformly distributed over $[-1, 1]$. The second moment about the origin for a uniform distribution $U(a, b)$ is given by $\frac{b^3 - a^3}{3(b - a)}$. For $U(-1, 1)$, this yields $E[x_i^2] = \frac{1}{3}$. 
    The components $x_i$ are independent, hence the sum of their squares, which represents the $L_2$ norm squared, is the sum of their expected values: $E[\|P\|_2^2] = \sum_{i=1}^{d} E[x_i^2] = d \cdot \frac{1}{3}$. 
    Taking the square root gives the expected $L_2$ norm: $E[\|P\|_2] = \sqrt{E[\|P\|_2^2]} = \sqrt{\frac{d}{3}}$.
\end{customlemma}

\subsubsection{Proof of Lemma~\ref{lem:norm_gaussian}} \label{sec:normal_proof}
We state again Lemma~\ref{lem:norm_gaussian} from Sec.~\ref{sec:similarity_uniform_gussian} and present the proof.

\begin{customlemma}{\ref{lem:norm_gaussian}}{\normalfont \textbf{(Gaussian Distribution } $L_2$ \textbf{ Norm)}} \nonumber
For $P = (x_1, x_2, ..., x_d) \sim N^d(0, 1)$, the expected $L_2$ norm is:
\begin{align}
E[\|P\|_2] = \sqrt{d}.
\end{align}
    \textbf{Proof:} 
    Each $x_i$ is distributed according to $N(0, 1)$. The square of a standard normal variable, $x_i^2$, follows a chi-squared distribution with 1 degree of freedom, for which the mean (expected value) is 1. 
    Given the independence of the components $x_i$, the expected value of the sum of their squares, representing the $L_2$ norm squared, is: $E[\|P\|_2^2] = \sum_{i=1}^{d} E[x_i^2] = d \cdot 1 = d$.
    The expected $L_2$ norm is the square root of this sum: $E[\|P\|_2] = \sqrt{E[\|P\|_2^2]} = \sqrt{d}$.
\end{customlemma}

\subsubsection{Proof of Lemma~\ref{lem:norm_bernoulli}} \label{sec:burnollie_proof}
We state again Lemma~\ref{lem:norm_bernoulli} from Sec.~\ref{sec:similarity_uniform_burnollie} and present the proof.

\begin{customlemma}{\ref{lem:norm_bernoulli}}{\normalfont \textbf{(Bernoulli Distribution } $L_2$ \textbf{ Norm)}} \nonumber
    For $P = (x_1, x_2, ..., x_d)$, with $x_i \in \{-1, 1\}$ and $P(x_i = 1) = P(x_i = -1) = 0.5$, the expected $L_2$ norm is:
\begin{align}
E[\|P\|_2] = \sqrt{d}.
\end{align}
    \textbf{Proof:} 
    Each $x_i$ takes values -1 or 1 with equal probability, leading to $x_i^2 = 1$ irrespective of $x_i$'s actual value. Hence, $E[x_i^2] = 1$. 
    Given the independence of the components $x_i$, the expected value of the sum of their squares, which represents the $L_2$ norm squared, is simply the sum of the expected values: $E[\|P\|_2^2] = \sum_{i=1}^{d} E[x_i^2] = d \cdot 1 = d$. 
    Therefore, the expected $L_2$ norm is the square root of this value: $E[\|P\|_2] = \sqrt{E[\|P\|_2^2]} = \sqrt{d}$.
\end{customlemma}

\end{document}